\def\C{\mathcal{C}}
\def\X{\mathcal{X}}
\def\R{\mathbb{R}}
\def\x{\mathbf{x}}
\def\v{\mathbf{v}}
\def\u{\mathbf{u}}
\def\y{\mathbf{y}}
\def\z{\mathbf{z}}
\def\ncut{\mathrm{NCut}}
\def \erfc{\mathrm{erfc}}
\newtheorem{theorem}{Theorem}
\newtheorem{lemma}[theorem]{Lemma}
\newtheorem{remark}[theorem]{Remark}
\begin{document}



  \title{\bf Improving Spectral Clustering using the Asymptotic Value of the Normalised Cut}
  \author{David P. Hofmeyr\thanks{
    The author would like to thank Dr. Nicos Pavlidis and Ms. Katie Yates for their assistance with experiments, and Dr. Nicos Pavlidis for his valuable comments on this work.}\hspace{.2cm}\\
    Department of Statistics and Actuarial Science, Stellenbosch University}
  \maketitle

\begin{abstract}
Spectral clustering is a popular and versatile clustering method based on a relaxation of the normalised graph cut objective. Despite its popularity, however, there is no single agreed upon method for tuning the important scaling parameter, nor for determining automatically the number of clusters to extract. Popular heuristics exist, but corresponding theoretical results are scarce. In this paper we investigate the asymptotic value of the normalised cut for an increasing sample assumed to arise from an underlying probability distribution, and based on this result provide recommendations for improving spectral clustering methodology. A corresponding algorithm is proposed with strong empirical performance.
\end{abstract}

\noindent%
{\it Keywords:}  Automatic clustering, Low density separation, Self-tuning clustering

\section{Introduction}
\label{sec:intro}

Clustering is the problem of identifying relatively homogeneous groups (clusters) within a given dataset. Without a universal definition of what constitutes a cluster, multiple objectives have been proposed and a multitude of algorithms developed for each objective. A common and intuitive theme underlying most approaches is that data within the same cluster should be more similar than data in different clusters. A principled approach to addressing this relies on the construction of a similarity graph for the data, in which vertices correspond to data and edges are assigned a weight equal
the similarity between their vertex endpoints. A graph cut refers to the removal of a subset of edges from the graph which renders it disconnected. A cut of the similarity graph for which the edges removed have low total weight therefore induces a partition (a clustering) of the data for which the total similarity between data in different clusters is low. Such a cut, however, does not ensure that the similarity of data in the same cluster is comparatively high. Moreover, in practice the minimum weight cut tends to separate only small collections of points from the remainder, which may not constitute complete clusters~\citep{Luxburg2007}.
To account for these facts, normalisations of the graph cut objective have been proposed~\citep{Shi2000, Hagen1992}. Normalisation, however, renders the associated optimisation problem NP-hard~\citep{Wagner1993}. It has been shown that a continuous relaxation of the normalised graph cut problem can be solved using the eigenvectors of the so-called graph Laplacian matrix. It is this relaxation of the normalised graph cut problem which is given the name spectral clustering.

Spectral clustering has become extremely popular in recent years, for its versatility and strong performance in numerous application areas~\citep{ng2001spectral, Luxburg2007}. One of the appealing properties of spectral clustering is its ability to identify highly non-convex clusters.
%
%
Of crucial importance to the success of spectral clustering, however, is an appropriate measure of similarity to be used in the construction of the similarity graph. In addition, as with all clustering methods, determining automatically the number of clusters to extract from the data is a non-trivial task. In general the similarity between two points is determined by a decreasing function of the distance between them, so that pairs of data which are nearer in space tend to be assigned higher similarity than those which are more distant. For data belonging to $\R^d$, it is most common to define the similarity between $\x, \y \in \R^d$ as $k(\|\x - \y\|/\sigma)$, where $k:\R^+ \to \R^+$ is a kernel function, $\|\cdot \|$ is the Euclidean norm, and the parameter $\sigma > 0$ captures the scale of the data. The crucial factor then becomes the setting of the scaling parameter, $\sigma$. There is no single agreed upon method for setting this parameter. Some popular heuristics have been developed~\citep{ng2001spectral, Zelnik2004}, but few of these are supported by theory. If $\sigma$ is set too large, then the ability of spectral clustering to separate highly non-convex clusters is severely diminished. On the other hand, if $\sigma$ is set too small, then spectral clustering becomes highly susceptible to separating only outliers rather than complete clusters of data. Existing approaches for selecting the number of clusters rely either on the eigen-gaps of the graph Laplacian, or on the linear structure of the associated eigenvectors~\citep{Zelnik2004, Luxburg2007}.
%
%

In this paper we will use the asymptotic properties of the normalised cut to provide recommendations for improving spectral clustering methodology. Specifically, it will be shown that if the data are assumed to represent a sample of realisations of a continuous random variable with Lipschitz continuous density, then the (scaled) normalised cut measured across a smooth surface converges almost surely to the normalised integrated squared density over that surface. This is a specific case of the result of~\cite{trillos2015consistency}, for which we provide an alternative and what we believe to be more accessible proof. A similar investigation was conducted by~\cite{narayanan2006relation}, however our conclusions are substantially different. 
Relating normalised cuts to the probability distribution underlying the data helps to better understand the normalised cut objective, and therefore spectral clustering. This allows us to provide sound and justified recommendations for the practical implementation of spectral clustering methods. Importantly, it will be discussed how the minimum normalised cut solution tends to separate clusters by regions of low density, while at the same time separating high density regions in the underlying density function. These are key features in the non-parametric statistical approach to clustering, known as {\em density clustering}, in which clusters are associated with regions of high probability density surrounding the modes of the density function. This result is especially appealing as the density clustering objective provides a principled means for automatically determining the number of clusters; as the number of modes of the density function. Furthermore,
the conditions on the scaling parameter, $\sigma$,
required for convergence provide guidelines for establishing an order of magnitude for this parameter, and also dictate how it should decrease as sample sizes increase. Finally, this result provides some insight into the susceptibility of spectral clustering to outliers.


Based on these observations we propose a spectral clustering algorithm which we have found to exhibit strong performance in a range of applications. We are not aware of any existing algorithms which make use of the asymptotic properties of the normalised cut objective to improve clustering methodology. This is also the only spectral clustering algorithm of which we are aware which uses the structure of the clusters in the original data space, and not the eigenvector representation, to select the number of clusters.

The remainder of this paper is organised as follows. In Section~\ref{sec:spectral} a background on normalised cuts and spectral clustering will be provided. Following that in Section~\ref{sec:a.s.conv} we will investigate the asymptotic value of the normalised cut, and discuss in greater detail the practical implications of this result. We also propose an algorithm which makes use of these practical implications to automatically set the scaling parameter and select the number of clusters to extract.
%
%
%
The results of experiments on publicly available benchmark datasets are then presented in Section~\ref{sec:experiments}. A final discussion is provided in Section~\ref{sec:conclusions}.

\section{Normalised Cuts and Spectral Clustering} \label{sec:spectral}

In this section we provide a brief introduction to normalised cuts and the relaxed solution through spectral clustering. Note that the spectral clustering methodology is applicable to any data for which pair-wise similarities can be established, however we will focus only on Euclidean embedded data for this work. Let $\X = \{\x_1, ..., \x_n\}$ be a collection of data in $\R^d$. For each pair $\x_i, \x_j$ define their similarity as $k(\|\x_i - \x_j\|/\sigma)$, where $k:\R^+ \to \R^+$ is a kernel, and $\sigma > 0$ is a scaling parameter. A popular choice for $k$ is the Gaussian kernel, $k(x) = \exp(-x^2/2)$. Now, define the {\em affinity matrix} $A \in \R^{n \times n}$ s.t. 
\begin{equation}
A_{i,j} = \left\{\begin{array}{ll}
 k(\|\x_i - \x_j \|/\sigma), & i \not = j\\
0, & i=j,
\end{array}\right.
\end{equation}
and the corresponding {\em degree matrix} $D = \mbox{diag}\left(\sum_{j=1}^n A_{1, j}, ..., \sum_{j=1}^n A_{n, j}\right)$. The matrix $A$ uniquely defines a graph $\mathcal{G} = (\mathcal{V}, \mathcal{E})$, the similarity graph of the data, which has $n$ vertices (one per datum) and whose edges have weights given by the elements of $A$, i.e. $\mathcal{E}_{i,j} = A_{i,j}$.

\begin{remark}
It is important to note that some authors prefer to set the diagonal elements of $A$ to $k(0) = 1$. This is a matter of preference, and we have found that setting the diagonal of $A$ to zero improves performance. The asymptotic result we will show in the next section applies for both definitions of $A$.
\end{remark}

\noindent
Now, for a partition (clustering) of $\X$ into $\{\C_1, ..., \C_c\}$, the corresponding graph cut is given by,
\begin{equation}
\mbox{Cut}(\C_1, ..., \C_c) = \frac{1}{2}\sum_{i=1}^c \sum_{\substack{j,l: \x_j \in \C_i,\\ \x_l \not \in \C_i}} A_{j,l}.
\end{equation}

\noindent
The factor $\frac{1}{2}$ is in place since each edge removed to create the cut is counted twice in the above summation.
Notice that the total cut for such a partition can be re-expressed as $\sum_{i=1}^c \mbox{Cut}(\C_i, \X\setminus \C_i)$. A low value cut ensures that the total similarity between data in different clusters is low, however it does not necessarily result in clusters with high internal similarity. A preferable objective for clustering is the {\em normalised cut}~\citep{Shi2000}, defined as

\begin{equation}
\mbox{NCut}(\C_1, ..., \C_c\} = \frac{1}{2}\sum_{i=1}^c \frac{1}{\sum_{\x_j \in \C_i} D_{j,j}}\sum_{\substack{j,l: \x_j \in \C_i \\ \x_l \not \in \C_i}} A_{j,l} = \sum_{i=1}^c \frac{\mbox{Cut}(\C_i, \X \setminus \C_i)}{\sum_{\x_j \in \C_i} D_{j,j}}.
\end{equation}

\noindent
The quantity $\sum_{\x_j \in \C} D_{j,j}$, where $\C \subset \X$, is referred to as the {\em volume} of $\C$, denoted $\mbox{vol}(\C)$~\citep{Luxburg2007}.
Notice that a low value normalised cut will, in general, have a low value of $\mbox{Cut}(\C_i, \X\setminus \C_i)$ and a large value of $\mbox{vol}(\C_i)$, for each $i = 1, \dots, c$. Now,
recall that $D_{j,j} = \sum_{l=1}^n A_{j,l}$, and therefore,
\begin{align*}
\mbox{vol}(\C_i) &= \sum_{\substack{j,l: \x_j \in \C_i\\ \x_l \in \X}} A_{j,l}\\
&= \sum_{\substack{j,l: \x_j \in \C_i \\ \x_l \not \in \C_i}} A_{j,l} + \sum_{\substack{j,l: \x_j, \x_l \in \C_i}} A_{j,l}\\
&= \mbox{Cut}(\C_i, \X \setminus \C_i) + \sum_{\substack{j,l: \x_j, \x_l \in \C_i}} A_{j,l}.
\end{align*}
For $\mbox{Cut}(\C_i, \X \setminus \C_i)$ to be low and $\mbox{vol}(\C_i)$ to be high, we must therefore have $\sum_{\substack{j,l: \x_j, \x_l \in \C_i}} A_{j,l}$ being relatively high. This quantity represents the internal cluster similarity for cluster $\C_i$. The minimum normalised cut therefore tends to produce clustering solutions which have low between cluster similarity, and high within cluster similarity.

It has been shown that determining the optimal partition of $\X$ based on the normalised cut objective is NP-hard~\citep{Wagner1993}. However, a reformulation of the normalised cut objective, in terms of the so-called {\em graph Laplacian matrix}, leads to an intuitive relaxation of the problem~\citep{Shi2000, Luxburg2007}. The graph Laplacian may be defined as,
\begin{equation}
L = D^{-1/2}(D-A)D^{-1/2} = I_n - D^{-1/2}AD^{-1/2}.
\end{equation}
Now, for a partition of $\X$, as before, into $\{\C_1, \dots, \C_c\}$, define the matrix
\begin{equation}\label{eq:discrete}
V \in \R^{n \times c}: 
V_{i,j} = \left\{\begin{array}{ll}
\sqrt{1/\mbox{vol}(\C_j)}, & \x_i \in \C_j\\
0, & \x_i \not \in \C_j.
\end{array}\right.
\end{equation}
Then the minimum normalised cut problem for a fixed number of clusters, $c$, can be restated as~\citep{Luxburg2007},
\begin{align}\label{eq:prob1}
\min_{\C_1, \dots, \C_c} & \mbox{trace}(V^\top (D-A) V)\\
\nonumber
\mbox{s.t. } & V \mbox{ defined as in Eq.~(\ref{eq:discrete})}\\
\nonumber
& V^\top D V = I_c.
\end{align}
Relaxing the discreteness condition on the elements of $V$ in Eq.~(\ref{eq:discrete}) and setting $U = D^{1/2}V$ we arrive at
\begin{equation} \label{eq:prob2}
\min_{U \in \R^{n \times c}} \mbox{trace}(U^\top L U), \mbox{ s.t. } U^\top U = I_c,
\end{equation}
which is the canonical formulation of the problem of finding the eigenvectors of $L$ associated with its smallest $c$ eigenvalues. The approximate solution of the minimum normalised cut problem is therefore given by $D^{-1/2}U$, where $U \in \R^{n \times c}$ has as columns the $c$ eigenvectors of $L$ associated with its $c$ smallest eigenvalues. 

Notice that the optimal $V$ in problem~(\ref{eq:prob1}) has as columns scaled cluster indicator vectors for the clusters $\C_1, \dots, \C_c$, and hence the cluster assignment based on $V$ is immediate. The relaxed solution $D^{-1/2}U$ arising from problem~(\ref{eq:prob2}) does not necessarily take only discrete values. To obtain a final clustering solution, a simple clustering algorithm is applied to the rows of $D^{-1/2}U$. A popular choice for this final step is the $k$-means algorithm.

\section{Improving Spectral Clustering using the Asymptotic Value of the Normalised Cut} \label{sec:a.s.conv}

In Section~\ref{sec:spectral} the values of $\sigma$, the scale parameter, and $c$, the number of clusters, were assumed known. In practice determining these parameters is extremely challenging.
In this section we will investigate the asymptotic value of the normalised cut for an increasing sample assumed to arise from an underlying probability distribution. Specifically, we show that the asymptotic value of the normalised cut measured across a smooth surface converges almost surely to the normalised integrated squared density measured along that surface. This allows us to provide recommendations for automatically setting values of $\sigma$ and $c$. Based on these recommendations we propose an algorithm which we have found to exhibit strong empirical performance in a range of applications.

\subsection{The asymptotic value of the normalised cut}

The result presented here is a specific case of the result of~\cite{trillos2015consistency}; where we investigate the particular practical example of the normalised cut objective, using the Gaussian similarity kernel. By investigating this particular case we are able to provide a more accessible proof. A similar investigation into the asymptotic value of the normalised cut is given in~\cite{narayanan2006relation}, however the result we obtain herein is both more intuitive in relation to normalised cuts and is a preferable objective for clustering.
\begin{theorem}\label{thm:a.s.conv}
Let $\X = \{\x_1, \dots, \x_n\}$ be an i.i.d. sample of realisations of a continuous random variable, $\mathbf{X}$, over $\R^d$, with bounded and Lipschitz continuous probability density $p:\R^d \to \R^+$. Let $S$ be a smooth surface which partitions $\R^d$ into $S_1$ and $S_2$ such that $0 < \mathbb{P}(\mathbf{X} \in S_1) < 1$. Let $\{\sigma_n\}_{n=1}^\infty$ be a null sequence satisfying $n \sigma_n^{2d+2+\epsilon} \to \infty$ for some $\epsilon > 0$. Then,
$$
\frac{\sqrt{2\pi}}{\sigma_n}\ncut(\X \cap S_1, \X \cap S_2) \xrightarrow{a.s.} \oint_S p(\y)^2 d\y \left(\frac{1}{\int_{S_1} p(\y)^2 d\y} + \frac{1}{\int_{S_2} p(\y)^2 d\y}\right),
$$
as $n \to \infty$.
\end{theorem}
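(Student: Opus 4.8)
The plan is to reduce the statement to three separate almost-sure limits --- one for the cut and one for each volume --- and then combine them by continuity. Writing $\C_1 = \X\cap S_1$, $\C_2 = \X\cap S_2$ and observing that, since this is a bipartition, $\mbox{Cut}(\C_1,\X\setminus\C_1) = \mbox{Cut}(\C_2,\X\setminus\C_2) =: \mathrm{Cut}$, we have $\ncut(\C_1,\C_2) = \mathrm{Cut}\,(1/\mbox{vol}(\C_1) + 1/\mbox{vol}(\C_2))$. It therefore suffices to prove $\mathrm{Cut}/(n^2\sigma_n^{d+1}) \xrightarrow{a.s.} (2\pi)^{(d-1)/2}\oint_S p^2$ and $\mbox{vol}(\C_i)/(n^2\sigma_n^d)\xrightarrow{a.s.}(2\pi)^{d/2}\int_{S_i}p^2$ for $i=1,2$. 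Substituting these into $\frac{\sqrt{2\pi}}{\sigma_n}\ncut$ and cancelling the powers of $\sigma_n$ and $n^2$ leaves the constant $\sqrt{2\pi}\,(2\pi)^{(d-1)/2}(2\pi)^{-d/2}=1$ multiplying the claimed limit; since the two volume limits $(2\pi)^{d/2}\int_{S_i}p^2$ are strictly positive (as $0<\mathbb{P}(\mathbf{X}\in S_1)<1$ and $p$ is bounded), the map sending the three normalised quantities to $\frac{\sqrt{2\pi}}{\sigma_n}\ncut$ is continuous at the limit point, and the theorem follows.

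For each of the three quantities I would first identify the limit of the normalised expectation and then establish concentration about it. Each is a sum over ordered pairs of a bounded kernel, e.g. $\mathbb{E}[\mathrm{Cut}] = n(n-1)\int_{S_1}\int_{S_2}\exp(-\|\x-\y\|^2/(2\sigma_n^2))\,p(\x)p(\y)\,d\x\,d\y$. The volume integrals are one-sided convolutions: the inner integral $\int_{\R^d}\exp(-\|\x-\y\|^2/(2\sigma_n^2))\,p(\y)\,d\y$ equals $(2\pi\sigma_n^2)^{d/2}p(\x)(1+o(1))$ by a standard kernel-smoothing estimate, where boundedness and Lipschitz continuity of $p$ control the error; this gives $\mathbb{E}[\mbox{vol}(\C_i)]\sim n^2(2\pi)^{d/2}\sigma_n^d\int_{S_i}p^2$. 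The cut integral is the delicate one: because $\x$ and $\y$ lie on opposite sides of $S$, as $\sigma_n\to 0$ the Gaussian concentrates the mass on a tube of width $O(\sigma_n)$ about $S$. I would introduce local normal coordinates about $S$ (a surface parameter $\mathbf{a}$ and signed distance $t$), so that to leading order $\|\x-\y\|^2 = (s-t)^2 + \|\mathbf{w}\|^2$, with $s<0<t$ the two normal coordinates and $\mathbf{w}\in\R^{d-1}$ the tangential displacement. The tangential Gaussian integral contributes $(2\pi\sigma_n^2)^{(d-1)/2}$ per unit surface area, while the normal integral $\int_{-\infty}^0\int_0^\infty \exp(-(s-t)^2/(2\sigma_n^2))\,dt\,ds = \sigma_n^2$; replacing $p(\x),p(\y)$ by their common surface value (again via the Lipschitz bound) and integrating over $S$ yields $\mathbb{E}[\mathrm{Cut}]\sim n^2(2\pi)^{(d-1)/2}\sigma_n^{d+1}\oint_S p^2$.

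For the almost-sure statements I would use a bounded-differences argument. Each of $\mathrm{Cut}$ and $\mbox{vol}(\C_i)$ is a function of the i.i.d. points $\x_1,\dots,\x_n$ in which moving a single point changes the value by at most $O(n)$ (one point participates in at most $2(n-1)$ edges, each of weight at most $1$). McDiarmid's inequality then gives $\mathbb{P}(|\mathrm{Cut}-\mathbb{E}\mathrm{Cut}|>s)\le 2\exp(-cs^2/n^3)$, and similarly for the volumes. Taking $s$ to be a fixed fraction of the mean (of order $n^2\sigma_n^{d+1}$ for the cut, $n^2\sigma_n^d$ for the volumes) turns the exponent into a quantity of order $n\sigma_n^{2d+2}$ for the cut and $n\sigma_n^{2d}$ for the volumes, the former being the binding constraint. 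The hypothesis $n\sigma_n^{2d+2+\epsilon}\to\infty$ is then used to make these deviation probabilities summable in $n$, so that Borel--Cantelli delivers the three normalised limits almost surely.

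The main obstacle is this concentration step rather than the integral asymptotics. Because the means vanish as $n\to\infty$, a plain variance bound controls only the absolute, not the relative, error; one must exhibit a deviation inequality whose exponent, after division by the shrinking mean, still tends to infinity fast enough to be summable, and it is precisely here that the exponent $2d+2$ and the safety margin $\epsilon$ in the scaling condition enter (for a polynomial rate $\sigma_n=n^{-\beta}$ the margin forces $n\sigma_n^{2d+2}$ to grow as a positive power of $n$, guaranteeing summability). A secondary technical point, needing care but no new ideas, is making the leading-order coarea computation for the cut rigorous: controlling the curvature-induced Jacobian corrections in the normal-coordinate change of variables and bounding the contribution of the region away from $S$, both of which I would dominate using boundedness and Lipschitz continuity of $p$ together with the exponential decay of the Gaussian tail.
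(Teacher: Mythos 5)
Your proposal is correct and follows essentially the same route as the paper: split $\ncut$ into the cut term and the two volume terms, identify the limits $\mathrm{Cut}/(n^2\sigma_n^{d+1})\to(2\pi)^{(d-1)/2}\oint_S p^2$ and $\mathrm{vol}(\C_i)/(n^2\sigma_n^{d})\to(2\pi)^{d/2}\int_{S_i}p^2$ by locally flattening $S$ and exploiting the product structure of the Gaussian, and obtain almost-sure convergence via McDiarmid's inequality, with the exponent $n\sigma_n^{2d+2}$ made summable by the $\epsilon$ margin in the scaling hypothesis. The only implementation differences are that the paper handles the volume terms by invoking Devroye--Wagner strong uniform consistency of the kernel density estimate rather than a separate expectation-plus-concentration argument, and controls the flattening error for the cut via tangent balls and Marcum $Q$-function bounds rather than an explicit normal-coordinate change of variables; both are details within the same strategy.
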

We use $\oint$ to distinguish integrals over $d-1$ dimensional surfaces lying in $\R^d$ from regular integrals over subsets of $\R^d$ or $\R$, for which we use $\int$.
\begin{remark}
The result in Theorem~\ref{thm:a.s.conv} can be shown to hold if the density, $p$, is only locally Lipschitz on a neigbourhood of $S$, and need not be globally Lipschitz. For ease of exposition, we prefer the above presentation.
\end{remark}
Notice that if we were able to maximise the minimum of the normalisation terms, $\int_{S_1}p(\y)^2d\y$ and $\int_{S_2} p(\y)^2d\y$, we would favour solutions in which there are regions within both $S_1$ and $S_2$ wherein the density takes values with especially large magnitude. This is because in maximising an integrated {\em squared} function, a solution in which the function takes large values, even over relatively small regions, would be preferred over a solution in which the function is of moderate magnitude over a relatively larger region. We expect, therefore, that the surface which minimises the normalised cut will (i) have low density along it, captured by the surface integral $\oint_S p(\y)^2 d\y$, and (ii) separate regions each containing points at which $p$ is large, as a result of normalising by the integrated squared density over $S_1$ and $S_2$. Specifically, if $p$ is multimodal then we would expect that the surface which minimises the normalised cut will tend to both pass through low density regions, and also separate the modes of $p$.  Both of these are key features in density clustering, in which clusters are associated with connected regions of high density surrounding the modes of the density, and are thus separated by regions of low density.

The proof of Theorem~\ref{thm:a.s.conv} is formed of the following two lemmas. Lemma~\ref{thm:support1} shows that the volume of the points arising within a measurable set, appropriately scaled, converges to the integrated squared density over that set. The intuition underlying this result is that for each $\x_i$, $\sum_{j =1}^n k(\|\x_i - \x_j\|/\sigma)$ is proportional to the kernel estimate of the density at $\x_i$, which we henceforth denote $\hat p(\x_i)$. The volume of the points arising in a measurable set, $M$, given by $\sum_{i: \x_i \in M} \sum_{j \not = i} k(\|\x_i - \x_j\|/\sigma)$, is therefore approximately proportional to a Monte Carlo estimate of $\int_M \hat p(\x) p(\x) d\x \approx \int_M p(\x)^2 d\x$. 
Lemma~\ref{thm:support2} then shows that the (scaled) cut across a smooth surface converges to the integrated squared density measured across it. The proof is more technical, and involves a series of approximations. Some useful ideas are borrowed from~\cite{narayanan2006relation}. The intuition in this case is that as the number of data increases, and so $\sigma_n$ approaches zero, the only kernel weights with a non-negligible contribution to the cut are those lying close to the surface, $S$. Each point on $S$ is therefore weighted proportionally to the number of pairs of data $\x \in S_1, \y \in S_2$ which have non-negligible similarity asymptotically. This weight is therefore proportional to squared density at the corresponding point on $S$. The complete proofs of Lemmas~\ref{thm:support1} and~\ref{thm:support2} are given in the appendix.

\begin{lemma}\label{thm:support1}
Let $\X = \{\x_1, \dots, \x_n\}$ be an i.i.d. sample of realisations of a continuous random variable, $\mathbf{X}$, over $\R^d$, with Lipschitz continuous probability density $p:\R^d \to \R^+$. Let $\{\sigma_i\}_{i=1}^\infty$ be a null sequence satisfying, $n\sigma_n^{2d+2+\epsilon} \to \infty$ for some $\epsilon > 0$. Let $M \subset \R^d$ be measurable. Then,
\begin{align*}
\frac{c_d}{n^2 \sigma_n^d}\sum_{i: \x_i \in M}\sum_{j \not = i}^n k\left(\frac{\|\x_i - \x_j\|}{\sigma_n}\right) \xrightarrow{a.s.} \int_{M} p(\x)^2 d\x, \mbox{ as } n \to \infty,
\end{align*}
where $c_d = (\int_{\R^d} k(\|\x\|)d\x)^{-1}$.
\end{lemma}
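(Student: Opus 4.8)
The plan is to recognise the left-hand side as a sample average of a leave-one-out kernel density estimate and then split it into its mean and a fluctuation term. Writing $\hat p_{-i}(\x_i) = \frac{c_d}{n\sigma_n^d}\sum_{j\neq i}k(\|\x_i-\x_j\|/\sigma_n)$ for the density estimate at $\x_i$ built from the remaining points, the quantity in question is exactly $T_n := \frac1n\sum_{i=1}^n \mathbf{1}[\x_i\in M]\,\hat p_{-i}(\x_i)$. I would establish $T_n\xrightarrow{a.s.}\int_M p^2$ by showing separately that (i) $\mathbb{E}[T_n]\to\int_M p(\x)^2\,d\x$, and (ii) $T_n-\mathbb{E}[T_n]\to 0$ almost surely.

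For the mean, since the $n(n-1)$ ordered pairs $(i,j)$ with $i\neq j$ are exchangeable, $\mathbb{E}[T_n]=\frac{n-1}{n}\,\mathbb{E}\big[\mathbf{1}[\mathbf{X}_1\in M]\,\tfrac{c_d}{\sigma_n^d}k(\|\mathbf{X}_1-\mathbf{X}_2\|/\sigma_n)\big]$, where $\mathbf{X}_1,\mathbf{X}_2$ are independent copies of $\mathbf{X}$. Conditioning on $\mathbf{X}_1=\x$ and substituting $\y=\x+\sigma_n\u$ turns the inner expectation into the convolution-smoothed quantity $c_d\int k(\|\u\|)\,p(\x+\sigma_n\u)\,d\u$. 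Lipschitz continuity of $p$, together with finiteness of the kernel's first moment $\int \|\u\|\,k(\|\u\|)\,d\u$ (which holds for the Gaussian kernel), bounds the difference from $p(\x)$ by $O(\sigma_n)$ uniformly in $\x$. Multiplying by $p(\x)$, integrating over $M$, and invoking dominated convergence (the integrand is bounded by $\|p\|_\infty\,p(\x)$, integrable over $M$) then gives $\mathbb{E}[T_n]\to\int_M p^2$; the prefactor $(n-1)/n\to1$ is harmless.

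The fluctuation term is the crux, and is where almost-sure rather than merely $L^2$ convergence has to be earned. My approach is a bounded-differences (McDiarmid) argument that exploits the boundedness of the Gaussian kernel by $k(0)=1$ and of the indicator by $1$. Replacing a single sample point $\x_m$ alters only the $2(n-1)$ summands indexed by $i=m$ or $j=m$, each at most $\frac{c_d}{n^2\sigma_n^d}$ in absolute value, so $T_n$ changes by at most $c_m\le \frac{2c_d}{n\sigma_n^d}$. McDiarmid's inequality then yields $\mathbb{P}(|T_n-\mathbb{E}[T_n]|>\delta)\le 2\exp\!\big(-\delta^2 n\sigma_n^{2d}/(2c_d^2)\big)$. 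The rate hypothesis $n\sigma_n^{2d+2+\epsilon}\to\infty$ together with $\sigma_n\to0$ forces $n\sigma_n^{2d}/\log n\to\infty$, so for every fixed $\delta>0$ these tail probabilities are eventually dominated by $2n^{-2}$ and are summable; Borel--Cantelli then delivers $T_n-\mathbb{E}[T_n]\to0$ almost surely.

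The main obstacle is precisely step (ii). A direct pairwise-covariance computation gives only $\mathrm{Var}(T_n)=O(1/n)+O(1/(n^2\sigma_n^d))$, whose leading $O(1/n)$ term is not summable, so Chebyshev combined with Borel--Cantelli yields convergence in probability but not the required almost-sure statement; some genuine concentration (or a subsequence-plus-gap-filling) device is therefore unavoidable. I would rely on the boundedness of the kernel to make the McDiarmid route work, noting that the somewhat generous exponent $2d+2+\epsilon$ is more than sufficient here and is presumably dictated by the more delicate companion cut estimate of Lemma~\ref{thm:support2}.
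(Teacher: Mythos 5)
Your proof is correct, but it takes a genuinely different route from the paper's. The paper centres the sum around the Monte Carlo term $\frac{1}{n}\sum_{i:\x_i\in M} p(\x_i)$, which converges almost surely to $\int_M p^2$ by the strong law, and then bounds the residual by $\sup_{\x}\vert \hat p(\x)-p(\x)\vert$, disposing of it by citing the Devroye--Wagner strong uniform consistency theorem for kernel density estimates (this is where the Lipschitz hypothesis and the rate $n\sigma_n^{2d+2+\epsilon}\to\infty$ enter in the paper's version). You instead centre around $\mathbb{E}[T_n]$, handle the bias directly by the convolution substitution $\y=\x+\sigma_n\u$ together with Lipschitz continuity, and earn the almost-sure convergence yourself via McDiarmid plus Borel--Cantelli. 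Your rate bookkeeping is sound: $n\sigma_n^{2d+2+\epsilon}\to\infty$ forces $n\sigma_n^{2d}\geq C n^{(2+\epsilon)/(2d+2+\epsilon)}$, so the exponential tails are indeed summable, and your observation that a naive variance bound of order $1/n$ would only give convergence in probability correctly identifies why genuine concentration is needed. What each approach buys: the paper's is shorter but outsources the hard analytic work to a cited uniform-consistency result; yours is self-contained and, pleasingly, uses exactly the bounded-differences device that the paper itself deploys for the companion Lemma~\ref{thm:support2}, so the two lemmas could be proved in a uniform style. One immaterial quibble: when a single $\x_m$ is resampled, the difference bound should account both for the $2(n-1)$ kernel terms involving index $m$ in each of the two configurations and for the possible flip of the indicator $\mathbf{1}[\x_m\in M]$, which changes your constant $c_m$ by a factor of $2$ but affects nothing downstream.
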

\noindent
Note that the rate of convergence of $\sigma_n$ to zero required for the above result is actually less strict than stated. In addition the choice of kernel is not limited to the Gaussian. Precise details of these requirements can be seen in~\cite{devroye1980strong}. However, the overall rate of convergence of $\sigma_n$ for Theorem~\ref{thm:a.s.conv} is dictated by the rate required for the following lemma. The proof of the following lemma also relies on the rotational invariance of the Gaussian kernel. In addition we will also make use of the distribution of the squared norm of the corresponding random variable, which is known in the case of the Gaussian distribution.

\begin{lemma}\label{thm:support2}
Let $\X = \{\x_1, \dots, \x_n\}$ be an i.i.d. sample of realisations of a continuous random variable, $\mathbf{X}$, over $\R^d$, with bounded Lipschitz continuous probability density $p:\R^d \to \R^+$. Let $S$ be a smooth surface which partitions $\R^d$ into $S_1$ and $S_2$ such that $0 < \mathbb{P}(\mathbf{X} \in S_1) < 1$. Let $\{\sigma_i\}_{i=1}^\infty$ be a null sequence satisfying $n\sigma_n^{2d+2+\epsilon} \to \infty$ for some $\epsilon > 0$. Then,
$$
\frac{c_d\sqrt{2\pi}}{n^2 \sigma_n^{d+1}}\sum_{i: \x_i \in S_1} \sum_{j: \x_j \in S_2} k\left(\frac{\|\x_i -  \x_j\|}{\sigma_n}\right) \xrightarrow{a.s.} \oint_S p(\y)^2 d \y \mbox{ as } n \to \infty,
$$
where $c_d = (\int_{\R^d} k(\|\x\|)d\x)^{-1}$.
\end{lemma}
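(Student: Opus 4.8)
My plan is to decompose the estimator into its mean plus a fluctuation and to treat the two parts separately. Let $T_n$ denote the left-hand side of the claimed limit. Since the points are i.i.d., for $i \neq j$ the pair $(\x_i,\x_j)$ has joint density $p(\x)p(\y)$, so that
\[
\mathbb{E}[T_n] = \frac{n(n-1)}{n^2}\cdot\frac{c_d\sqrt{2\pi}}{\sigma_n^{d+1}}\int_{S_1}\int_{S_2} k\!\left(\frac{\|\x-\y\|}{\sigma_n}\right)p(\x)p(\y)\,d\x\,d\y .
\]
Because $n(n-1)/n^2 \to 1$, the deterministic heart of the lemma is to show that the scaled double integral converges to $\oint_S p(\y)^2\,d\y$; the remaining task is then to show $T_n - \mathbb{E}[T_n] \to 0$ almost surely.

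For the integral I would first localise to a thin tube around $S$. Since one argument lies in $S_1$ and the other in $S_2$, the segment joining them crosses $S$, so $\|\x-\y\|$ is bounded below by the distance of the farther point from $S$; the Gaussian tail of $k$ then renders the contribution of pairs outside a tube of width of order $\sigma_n\sqrt{\log(1/\sigma_n)}$ negligible relative to $\sigma_n^{d+1}$. Within the tube I would use the smoothness of $S$ to introduce normal coordinates $(\s,t)$, with $\s \in S$ the nearest-surface point and $t$ the signed normal distance; the change-of-variables Jacobian differs from $1$ by a curvature term of order $t$, which vanishes uniformly as the tube shrinks. Lipschitz continuity of $p$ then lets me replace $p(\x)p(\y)$ by $p(\s)^2$ on the tube, up to an error that is $o(1)$ after scaling. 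Exploiting the rotational invariance of the Gaussian to split each displacement into its tangential ($d-1$ dimensional) and normal parts, the tangential integral contributes the surface measure $d\s$ together with a factor $(2\pi\sigma_n^2)^{(d-1)/2}$, while the normal integral reduces (via the substitution $a = t_\x > 0$, $b = -t_\y > 0$) to $\int_0^\infty\!\!\int_0^\infty e^{-(a+b)^2/(2\sigma_n^2)}\,da\,db = \sigma_n^2$. The powers of $\sigma_n$ then cancel exactly, and the constant $c_d\sqrt{2\pi}\,(2\pi)^{(d-1)/2} = c_d(2\pi)^{d/2} = 1$, which confirms the normalisation and yields $\oint_S p(\s)^2\,d\s$.

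For the fluctuation I would deliberately avoid a variance bound: the non-degenerate (H\'ajek) projection of this V-statistic contributes a variance of order $1/(n\sigma_n)$, whose sum over $n$ diverges for any polynomially decaying bandwidth, so Chebyshev plus Borel--Cantelli fails. Instead I would use the bounded-differences inequality. Each summand is at most $c_d\sqrt{2\pi}\,\sigma_n^{-(d+1)}$, and altering a single $\x_k$ changes at most $2n$ of the summands, so $T_n$ has bounded differences of order $n^{-1}\sigma_n^{-(d+1)}$. McDiarmid's inequality then gives
\[
\mathbb{P}\!\left(|T_n - \mathbb{E}[T_n]| > \delta\right) \le 2\exp\!\left(-c\,\delta^2\, n\,\sigma_n^{2d+2}\right)
\]
for some $c>0$. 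The hypothesis $n\sigma_n^{2d+2+\epsilon}\to\infty$ forces $\sigma_n \ge n^{-1/(2d+2+\epsilon)}$ eventually, whence $n\sigma_n^{2d+2} \ge n^{\epsilon/(2d+2+\epsilon)}$ grows like a positive power of $n$; the tail bounds are therefore summable and Borel--Cantelli delivers $T_n - \mathbb{E}[T_n]\to 0$ almost surely. Combined with the mean computation this proves the claim, and it also explains precisely why the exponent $2d+2+\epsilon$ (rather than merely $2d+2$) is required.

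The hard part will be the deterministic integral rather than the concentration, which is essentially automatic once the bounded-differences constant is identified. The obstacle in the mean computation is to control the tube truncation, the curvature error from flattening $S$, and the Lipschitz error in $p$ simultaneously, all while tracking the exact Gaussian constant so that the $\sqrt{2\pi}$ and $c_d$ prefactors collapse to $1$. Making the flattening and the bound on the non-tube contribution uniform over all of $S$ is the delicate point, and it is here that the smoothness (bounded curvature) of $S$ together with the boundedness of $p$ must be used.
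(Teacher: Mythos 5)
Your proposal is correct, and its overall architecture---reduce to convergence of the expectation via McDiarmid's bounded-differences inequality with increment $O(n^{-1}\sigma_n^{-(d+1)})$, deduce summable tails from $n\sigma_n^{2d+2}\to\infty$, and then evaluate the limiting expectation by localising to a shrinking neighbourhood of $S$---is exactly the paper's. Where you genuinely diverge is in the evaluation of the mean. The paper works one-sidedly: it fixes $\y\in S_1$, replaces $p(\z)$ by $p(\y)$ via Lipschitz continuity, sandwiches $\int_{S_2}\sigma^{-d}k(\|\y-\z\|/\sigma)\,d\z$ between integrals over a tangent ball and the complement of a tangent ball of the rolling-ball radius $r$, controls the discrepancy through Marcum $Q$-function and $\erfc$ asymptotics to reduce to the half-space value $1-\Phi(d(\y,S)/\sigma)$, and finally integrates over the level sets $\{d(\y,S)=\eta\}$ using the Lipschitz continuity of $\eta\mapsto\oint_{d(\y,S)=\eta}p^2$, obtaining $\tfrac{\sqrt{2\pi}}{\sigma}\int_0^\infty(1-\Phi(\eta/\sigma))\,d\eta=1$. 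You instead treat both arguments symmetrically in normal coordinates on a tube, flatten $S$ with a curvature-controlled Jacobian, and collapse the normal integral via $\int_0^\infty\!\int_0^\infty e^{-(a+b)^2/(2\sigma^2)}\,da\,db=\sigma^2$; your constant bookkeeping ($c_d(2\pi)^{d/2}=1$) is correct. Your route is arguably cleaner and more symmetric, but the burden you defer---making the flattening error and tube truncation uniform over $S$---is precisely what the paper's tangent-ball/Marcum-$Q$ sandwich is engineered to discharge, so a complete write-up of your version would need a quantitative positive-reach (uniform $r>0$) argument playing the same role. Your observation that a Chebyshev--Borel--Cantelli route fails because the H\'ajek projection has variance of order $1/(n\sigma_n)$ is a correct and worthwhile remark that the paper does not make explicitly.
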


\noindent
Lemmas~\ref{thm:support1} and~\ref{thm:support2} show immediately that Theorem~\ref{thm:a.s.conv} holds.
This is because a ratio of almost surely convergent sequences converges almost surely to the ratio of limits, provided the denominator limit is non-zero. This is ensured in this case by the assumption that $0 < \mathbb{P}(\mathbf{X} \in S_1) < 1$.

\begin{remark}
An alternative normalisation of the graph cut, which is referred to as Ratio Cut~\citep{Hagen1992}, is given as
\begin{equation*}
\mbox{Ratio Cut}(\C, \X\setminus \C) = \mbox{Cut}(\C, \X\setminus \C)\left(\frac{1}{\vert \C \vert} + \frac{1}{\vert \X \setminus \C \vert}\right).
\end{equation*}
Lemma~\ref{thm:support2} shows almost immediately that the Ratio Cut measured across $S$, appropriately scaled, converges almost surely to
\begin{equation*}
\oint_S p(\y)^2 d\y \left(\frac{1}{\mathbb{P}(\mathbf{X} \in S_1)} + \frac{1}{\mathbb{P}(\mathbf{X} \in S_2)}\right),
\end{equation*}
as $n \to \infty$.
\end{remark}

\subsection{Automating spectral clustering using the asymptotic properties of the normalised cut}

Recall that in density clustering clusters are associated with connected regions of high density surrounding the modes of a probability density function. In practice, however, the underlying distribution is unknown and must be estimated. This leads to an appealing interpretation of clusters as regions of high data density, separated by data sparse regions. The practical limitations of density clustering are that it becomes computationally prohibitive to compute these connected high density regions for large and high dimensional datasets, and that density estimation becomes less reliable as dimensionality increases~\citep{rinaldo2010generalized, menardi2014advancement}.

As discussed in the previous subsection, if the probability distribution underlying the data is multimodal, then the spectral clustering solution will tend to separate clusters by regions of low density, while also assigning high density regions, or more specifically modes, to different clusters. We expect therefore that a clustering solution arising from spectral clustering is likely to satisfy the objectives of density clustering. We can use this observation to determine an appropriate number of clusters to extract using spectral clustering, as follows. Suppose that $\{\C_1, \dots, \C_c\}$ represents a clustering solution arising from spectral clustering, for a number of clusters $c$. If $c$ is at most the number of modes of the probability density function then we expect that the cluster boundaries will pass through low density regions and that each cluster will contain at least one mode. On the other hand, if $c$ is greater than the number of modes we either have that a mode is split among multiple clusters or that at least one cluster contains only low density regions.
Motivated by this observation, we consider using spectral clustering to propose potential clustering solutions, and assessing their validity in the context of density clustering. Specifically, for a range of values $c$, we use spectral clustering to obtain a $c$-way clustering of $\X$. For each cluster we then determine if it is separated from the rest of the clusters only by low density regions. The largest value of $c$ for which all clusters are separated from the remainder then represents the correct number of clusters to extract. 

%
%
%
To determine if a cluster, $\C \subset \X$, is separated from the remainder we attempt to establish a path connecting $\C$ to $\X \setminus \C$ which does not pass through any relatively low density regions. If we fail to find such a path then we conclude that the cluster is separated. A point $\x \in \R^d$ has relatively low density if the estimated density at that point satisfies $\hat p(\x) < \lambda\min\{\max_{\y \in \C} \hat p(\y), \max_{\y \in \X \setminus \C} \hat p(\y)\}$, where $\lambda \in (0, 1]$ is a chosen setting. 
 We cannot, of course, consider all possible paths, and instead adopt the following heuristic.
Noticing that a path connecting a point in $\C$ to a point in $\X \setminus \C$ must pass through the boundary of $\C$, we need only to consider paths connecting the boundary of $\C$ to $\X\setminus \C$. To do this we define the boundary points of $\C \subset \X$ to be those elements which are nearest to some element of $\X \setminus \C$. That is,
\begin{equation}
\mbox{Boundary}(\C) = \bigcup_{\x \in \X \setminus \C} \mbox{arg}\min_{\y \in \C} d(\x, \y).
\end{equation}
Then for each $\x \in$ Boundary$(\C)$ we find the nearest point in $\X\setminus \C$, $\y = \mbox{arg}\min_{\z \in \X\setminus \C} d(\x, \z)$, and then search over the line segment $[\x, \y]$ for points of relatively low density. If there exists such a pair $\x, \y$ with no points of relatively low density between them then we conclude that $\C$ is not separate from the remaining clusters, as there is a path connecting them which avoids low density regions.
Pseudocode for the above procedure is given in Algorithm~\ref{alg:densep}. 

\begin{remark}
Recall that $\hat p(\x_i) \propto \sum_{j=1}^n k(\|\x_i-\x_j\|/\sigma) = D_{i,i}+1$. We therefore do not have to compute the density values from scratch, except when searching over line segments joing pairs of points in $\X$. All pairwise distances, which are used to find the boundary points of $\C$ and $\X\setminus \C$, are also already computed to obtain the spectral clustering solution.
\end{remark}

\begin{algorithm}
  \algsetup{linenosize=\tiny}
  \footnotesize
\begin{algorithmic}
\STATE $\hat p_{\mathrm{thresh}} \gets \min\{\max_{i: \x_i \in \C} D_{i,i} + 1, \max_{i: \x_i \in \X \setminus \C} D_{i,i} + 1\}$
\FOR{$\x \in$ Boundary$(\C)$}
\STATE $\y \gets \mbox{arg}\min_{\z \in \X \setminus \C} d(\x, \z)$
\IF{$\min_{\gamma \in [0, 1]} \sum_{i=1}^n k\left(\frac{\|\gamma \x + (1-\gamma) \y - \x_i\|}{\sigma}\right) \geq \lambda \hat p_{\mathrm{thresh}}$}
\STATE return FALSE
\ENDIF
\ENDFOR
\STATE return TRUE
\end{algorithmic}
\caption{is.density.separated$(\C, \X \setminus \C; \lambda)$ \label{alg:densep}}
\end{algorithm}


A more subtle point arising from Theorem~\ref{thm:a.s.conv}, and one which is a caveat concerning the above methodology, may be interpreted in the context of outliers. A dataset containing outliers may be thought of as having arisen from a mixture distribution comprising a main distribution, which contributes to the structure of interest and which represents the vast majority of probability mass; and a long-tailed distribution with low probability mass which produces a small number of outlying points. The length of the tail of this ``outlier" distribution dictates how susceptible, in general, an algorithm will be to the outliers. Theorem~\ref{thm:a.s.conv} helps to provide insight into the susceptibility of spectral clustering to outliers. For example, if the tail of the distribution is such that
$$
\lim_{b \to \infty} \frac{\oint_{\|\y \| = b} p(\y)^2 d\y}{\int_{\| \y \| > b} p(\y)^2 d\y} = 0,
$$
then the spectral clustering solution will tend to focus on separating only few points arising in the tail. We would not expect in this case that the density in regions separating these ``outlier clusters" from the remainder will be lower than the density {\em within} these outlier clusters. We identify outlier clusters solely based on their size, and therefore do not check if clusters containing fewer than a prespecified number of points are density separated from the remainder, and only focus on the more substantial clusters.
How these outliers are interpreted after the fact is open to a user. If the presence or absence of outliers is in itself a point of interest then these may be inspected separately. We simply merge outliers with their nearest non-outlier cluster, as for the context of this article this makes comparison with other flat clustering algorithms more straightforward.\\
%
%
\\
Finally, recall the requirement on the sequence of scaling parameters given in Theorem~\ref{thm:a.s.conv}, that $\lim_{n \to \infty} n\sigma_n^{2d + 2 + \epsilon} = \infty$ for some $\epsilon > 0$ is sufficient to obtain almost sure convergence of the normalised cut. Although this does not dictate a specific value of the scaling parameter for spectral clustering, it does suggest how $\sigma$ should decrease as the sample size increases. In particular setting $\sigma = s n^{-1/(2d+3)}$ is appropriate, where $s$ is some stable measure of scale which can be computed from the dataset. The value of $s$ should be almost surely bounded both above and away from zero to ensure convergence of the normalised cut.\\
\\
Combining the above points we arrive at an algorithm which we call Spectral Partitioning Using Density Separation (SPUDS), pseudocode for which can be found in Algorithm~\ref{alg:spuds}. For an initial number, $c_0$, we compute the spectral clustering solution with $c_0$ clusters. If all clusters whose sizes are above a chosen size (the non-outlier clusters) are density separated from the remainder, then the best solution must have at least $c_0$ clusters. We therefore repeatedly increase the number of clusters by one until the spectral clustering solution yields a non-outlier cluster which is not density separated from the remainder. The clustering solution for the largest number of clusters in which all non-outlier clusters are density separated is stored.

On the other hand, if in the clustering solution containing $c_0$ clusters there is a non-outlier cluster which is not density separated from the remainder, then the correct number of clusters is less than $c_0$. We therefore repeatedly decrease the number of clusters by one and store the first clustering solution in which all non-outlier clusters are density separated.

Finally, each outlier cluster in the stored clustering solution is merged with its nearest non-outlier cluster, and the resulting clusters are returned.\\
\\
One practical note on this method is as follows: if either $c_0$ is much smaller than the final number of clusters, or if outlying points are sufficiently distant that spectral clustering favours splitting off more and more of the tail of the data over producing more substantial clusters, then this method may run slowly. It can be accelerated by, rather than increasing $c$ by one with each iteration when seeking additional clusters, increasing it by a larger number. Then, when a solution is reached in which a non-outlier clusters is not density separated from the remainder, the value of $c$ can be fine tuned, decreasing it repeatedly by one as in the original algorithm.

\begin{algorithm}
  \algsetup{linenosize=\tiny}
  \footnotesize
Input: Data $\X$, initial cluster number $c_0$, density path threshold $\lambda$, scale measure function $s$, outlier cluster size threshold $\gamma$.
 \begin{algorithmic}
\STATE $\sigma \gets s(\X)n^{-1/(3+2d)}$
\STATE $A \gets A_{i,j} = \exp\left(-\frac{\|\x_i - \x_j\|^2}{2\sigma^2}\right), A_{ii} = 0$
\STATE $D \gets \mbox{diag}\left(\sum_{i=1}^n A_{i,1}, \dots, \sum_{i=1}^n A_{i,n}\right)$
\STATE $L \gets I_n - D^{-1/2}AD^{-1/2}$
\STATE $c \gets c_0$
\FOR{$i \in \{1, \dots c\}$}
\STATE $\u_i \gets$ eigenvector associated with the $i^{th}$ smallest eigenvalue of $L$
\ENDFOR
\STATE $U\gets [\u_1, \dots, \u_c]$
\STATE $\{\C_1, \dots, \C_c\} \gets k\mbox{-means}(D^{-1/2}U, c)$
\IF{$\forall i \in \{1, \dots, c\} : \vert \C_i \vert < \gamma \mbox{ {\bf  or} is.density.separated}(\C_i, \X\setminus \C_i; \lambda)$}
\REPEAT
\STATE $\{\C_1^\prime, \dots, \C_c^\prime\} \gets \{\C_1, \dots, \C_c\}$
\STATE $c \gets c+1$
\STATE $\u_{c} \gets$ eigenvector associated with the $c^{th}$ smallest eigenvalue of $L$
\STATE $U\gets [\u_1, \dots, \u_c]$
\STATE $\{\C_1, \dots, \C_c\} \gets k\mbox{-means}(D^{-1/2}U, c)$
\UNTIL{$\exists i \in \{1, \dots, c\} : \vert \C_i \vert \geq \gamma, \neg \mbox{is.density.separated}(\C_i, \X\setminus \C_i; \lambda)$}
\STATE $c \gets c-1$
\STATE $\{\C_1, \dots, \C_c\} \gets \{\C_1^\prime, \dots, \C_c^\prime\}$
\ELSE
\REPEAT
\STATE $c \gets c-1$
\STATE $U \gets [\u_1, \dots, \u_c]$
\STATE $\{\C_1, \dots, \C_c\} \gets k\mbox{-means}(D^{-1/2}U, c)$
\UNTIL{$\forall i \in \{1, \dots, c\} : \vert \C_i \vert < \gamma \mbox{ {\bf  or} is.density.separated}(\C_i, \X\setminus \C_i; \lambda)$}
\ENDIF
\STATE $O \gets \bigcup_{i: \vert \C_i \vert < \gamma} \{i\}$
\FOR{$i \in O$}
\STATE $i^\prime \gets \mbox{arg}\min_{j \in \{1, \dots, c\}\setminus O}d(\C_i, \C_j)$
\STATE $\C_{i^\prime}\gets \C_{i^\prime} \cup \C_{i}$ 
\ENDFOR
\RETURN $\bigcup_{i \in \{1, \dots, c\}\setminus O} \C_i$
\end{algorithmic}
  \caption{Spectral Partitioning Using Density Separation (SPUDS) \label{alg:spuds}}
\end{algorithm}

\section{Experimental Results} \label{sec:experiments}
 In this section we will investigate empirically the performance of the algorithm presented in the Section~\ref{sec:a.s.conv}. For all experiments we use the following settings. To measure the scale of each dataset we rely on the eigenvalues of its covariance matrix. Although these eigenvalues measure the linear scale of the data, where spectral clustering may be better informed by a more flexible scale measure, they benefit from simplicity and interpretability. In particular we define $s(\X) = \sqrt{\bar \epsilon}$, where $\bar \epsilon$ is the average of the largest $d^\prime$ eigenvalues of the covariance of $\X$, and $d^\prime$ is an estimate of the intrinsic dimension of $\X$ for which we use Kaiser's criterion~\citep{kaiser1960application}. We found this setting to work fairly consistently for datasets of moderate dimensionality, but for some higher dimensional examples Kaiser's criterion resulted in too small a value of $\sigma$. We therefore set $d^\prime$ to 20 if Kaiser's criterion selects a greater value. We distinguish outlier clusters from others as those which contain fewer than $n/200$ points. We set $c_0$, the initial number of clusters, to 30. Finally we set $\lambda$, the density threshold parameter, to $1$. This value corresponds to the strictest interpretation that clusters should not contain more than one mode. Note that for lower values of the scaling parameter, $\sigma$, the variance of the pointwise density estimates $\hat p(\x_i)$ is greater. In such cases it may be beneficial to accommodate some of this variability by setting $\lambda < 1$, thereby potentially connecting pairs of modes in $\hat p$ when their heights are not that well distinguished from the density between them.

In addition, we also experimented with the following existing algorithms in order to draw comparisons with the proposed method.
\begin{enumerate}
\item Self-Tuning Spectral Clustering (STSC): A fully automatic spectral clustering algorithm which uses a local scaling method to compute similarities and which selects the number of clusters which, through rotation, provides the best alignment of the eigenvectors of the Laplacian matrix with the canonical basis vectors~\citep{Zelnik2004}. We used the implementation provided by the authors available from {\tt http://www.vision.caltech.edu/lihi/Demos/SelfTuningClustering.html}.
\item Spectral Clustering using Cluster Distortion (SCCD): Spectral clustering which selects the scaling parameter as that which provides the minimum cluster distortion within the eigenvectors of the Laplacian matrix~\citep{ng2001spectral}. This method cannot automatically determine the number of clusters.
\item Alternative Spectral Clustering (Alt.SC): An automatic spectral clustering method which uses a similar scaling approach as STSC, but which uses strictly smaller scale values, and which selects the number of clusters using an adaptation of Bartlett's test for equal variances. We used the implementation in the {\tt R} package {\tt speccalt}~\citep{bruneau2013speccalt}, available from the {\tt CRAN} repository.
\item $k$-means: The ubiquitous $k$-means clustering algorithm is based on minimising the sum of squared distances between data and their assigned cluster's mean. We use the default implementation in {\tt R}, which is based on the algorithm of~\cite{hartigan1979algorithm}. To select the number of clusters we used the Gap Statistic~\citep{tibshirani2001estimating}, and set the maximum number of clusters to be twice the actual number.
\item Optimal Extraction of Clusters from Hierarchies (OCE): The algorithm of~\cite{campello2013framework} applied to cluster hierarchies constructed based on connectivity of clusters using single and average linkage as well as Ward's method. We report only those results based on Ward's method, as this method performed best on the data considered.
\item Gaussian Mixture Model (GMM): We used the method of~\cite{fraley2002model}, which uses the Bayesian Information Criterion to select the number of clusters, and the flexibility of the component covariance matrices.
\item DBSCAN: The method of~\cite{ester1996density} is an approximation of the density clustering solution for a kernel density estimate using the uniform kernel. The heuristics for setting the scale and level set parameters described by the authors did not produce sensible results. We therefore generated solutions for a range of settings and report the highest performance for each dataset. This therefore likely overestimates the true performance expected from DBSCAN. In addition we considered more flexible density clustering algorithms, but these were either incapable of handling the datasets due to size, or they did not render sensible solutions for any parameter settings tried.
\end{enumerate}
The following collection of benchmark datasets were used for comparison. Optical recognition of handwritten digits\footnote{\tt https://archive.ics.uci.edu/ml/datasets.html} (Opt. Digits), Pen-based recognition of handwritten digits$^1$ (Pen Digits), Multi-feature digits$^1$ (M.F. Digits), Statlog landsat satellite$^1$ (Satellite), Yale faces database B 40$\times$30\footnote{\tt https://cervisia.org/machine\_learning\_data.php}, Phoneme\footnote{\tt https://statweb.stanford.edu/$\sim$tibs/ElemStatLearn/data.html}, Smartphone based activity recognition$^1$ (Smartphone), Isolet$^1$, 
and Statlog image segmentation$^1$ (Image Seg.). 
We evaluate clustering solutions using the Normalised Mutual Information~\citep{strehl2002cluster}, which computes the proportion of shared information in the clustering result and the true clusters.
Results from these experiments are summarised in Table~\ref{tb:performance}. We see that the proposed algorithm obtains the highest or close to highest performance in all but one case (Satellite), in which it substantially underestimates the number of clusters. It frequently obtained the highest performance of all algorithms considered, and importantly consistently outperforms other spectral clustering algorithms.

\begin{table}[h!]
\centering
\caption{Clustering performance based on Normalised Mutual Information (NMI) on benchmark datasets. The highest performance in each case is highlighted in bold. \label{tb:performance}}
\scalebox{.93}{
\begin{tabular}{lc|cccccccc}
&& SPUDS & STSC & SCCD & Alt.SC & $k$-means & OCE & GMM & DBSCAN\\
\hline\hline
Opt. Digits 
& NMI & {\bf 0.79} & 0.73 & 0.01 & 0.01 & 0.68 & 0.59 & 0.63 & 0.56\\
(c=10, n=5620, d=64) & c & 13 & 9 & - & 3 & 19 & 4 & 9 & 8\\
\hline
Pen Digits
& NMI & 0.70 & 0.38 & 0.19 & {\bf 0.80} & 0.73 & 0.64 & 0.73 & 0.69\\
(c=10, n=10992, d=16)& c & 9 & 2 & - & 19 & 20 & 5 & 9 & 27\\
\hline
M.F. Digits
& NMI & 0.79 & 0.71 & {\bf 0.80} & 0.73 & 0.72 & 0.57 & 0.00 & 0.65\\
(c=10, n=2000, d=216)& c & 18 & 8 & - & 8 & 20 & 3 & 1 & 7\\
\hline
Satellite
& NMI & 0.40 & 0.39 & 0.62 & {\bf 0.66} & 0.60 & 0.38 & 0.55 & 0.51\\
(c=6, n=6435, d=36)& c & 3 & 2 & - & 7 & 11 & 3 & 9 & 5\\
\hline
Yale Faces
& NMI & {\bf 0.84}  & 0.04 & 0.73 & 0.28 & 0.70 & 0.81 & 0.78 & 0.54\\
(c=10, n=5850, d=1200)& c & 14 & 2 & - & 7 & 20 & 10 & 9 & 47\\
\hline
Phoneme
& NMI & 0.69 & 0.66 & {\bf 0.87} & 0.63 & 0.68 & 0.68 & 0.61 & 0.37\\
(c=5, n=4509, d=256)& c & 7 & 3 & - & 3 & 10 & 3 & 4 & 3\\
\hline
Smartphone
& NMI & 0.55* & {\bf 0.57} & 0.51 & 0.49 & 0.56 & 0.57 & 0.00 & 0.41\\
(c=12, n=10929, d=561)& c & 7* & 2 & - & 2 & 24 & 2 & 1 & 3\\
\hline
Isolet
& NMI & {\bf 0.72}* & 0.64 & 0.70 & 0.26 & 0.70 & 0.42 & 0.40 & 0.25\\
(c=26, n=6238, d=617)& c & 25* & 15 & - & 2 & 52 & 2 & 2 & 5\\
\hline
Image Seg.
& NMI & {\bf 0.68} & 0.42 & 0.03 & 0.01 & 0.61 & 0.46 & 0.62 & 0.50\\
(c=26, n=2310, d=19)& c & 6 & 3 & - & 3 & 14 & 2 & 8 & 5\\
\hline
\hline
Average && {\bf 0.68} & 0.50 & 0.50 & 0.43 & 0.66 & 0.57 & 0.48 & 0.50\\
Average Regret && {\bf 0.06} & 0.24 & 0.25 & 0.32 & 0.08 & 0.18 & 0.27 & 0.25\\
Average Rank && {\bf 1.44} & 4.06 & 3.22 & 4.17 & 2.28 & 3.78 & 3.94 & 5.11\\
\hline
\hline
\end{tabular}
}
{\scriptsize * indicates that the accelerated version of the algorithm, described at the end of Section~\ref{sec:a.s.conv}, was used. The number of clusters was increased by 10 with each iteration.}
\end{table}

In addition to the performance on the individual datasets, we also report the average performance over all datasets, as well as the average regret and average rank. The regret of an algorithm on a dataset is the difference between the best performing method on that dataset and the performance of the algorithm. The rank of an algorithm on a dataset is its position in the ordered set of performance values on that dataset. It is clear that overall the proposed algorithm is competitive with existing algorithms. Again the comparisons with existing spectral clustering algorithms is compelling. The only algorithm with similar performance overall is $k$-means. Note however that because of the high computation time associated with estimating the number of clusters via the Gap Statistic, we enforced an upper bound on the number of clusters which could be extracted. In most cases this upper bound was selected by the Gap. It is likely therefore that without such a reasonable upper bound this approach would even more significantly overestimate the number of clusters, thereby causing performance to deteriotate.

\section{Discussion} \label{sec:conclusions}

In this paper the asymptotic value of the normalised cut was investigated, and a new spectral clustering algorithm proposed which makes use of this asymptotic result. Speficically, based on the asymptotic value of the normalised cut we expect that clustering solutions arising from spectral clustering will both separate clusters by regions of low empirical density, and also assign high density regions to each cluster. This observation is used to automatically determine the number of clusters, as the largest value for which all clusters remain separated from the remainder by low density regions. Furthermore requirements on the scaling parameter to ensure convergence of the normalised cut are used to provide a data driven method for setting this important parameter. Experiments on a collection of benchmark datasets indicate that this new algorithm is competitive with state of the art clustering methods.

\section*{Appendix: Proofs of Lemmas~\ref{thm:support1} and~\ref{thm:support2}}

Throughout the remainder we will assume that the kernel $k$ is normalised so that $\int_{\R^d}k(\|\x\|) = 1$. Notice that this does not affect the value of the normalised cut, as the normalisation is applied in both the numerator and denominator terms.

\subsection*{Lemma~\ref{thm:support1}}

{\em
Let $\X = \{\x_1, \dots, \x_n\}$ be an i.i.d. sample of realisations of a continuous random variable, $\mathbf{X}$, over $\R^d$, with Lipschitz continuous probability density $p:\R^d \to \R^+$. Let $\{\sigma_i\}_{i=1}^\infty$ be a null sequence satisfying, $n\sigma_n^{2d+2+\epsilon} \to \infty$ for some $\epsilon > 0$. Let $M \subset \R^d$ be measurable. Then,
\begin{align*}
\frac{c_d}{n^2 \sigma_n^d}\sum_{i: \x_i \in M}\sum_{j \not = i}^n k\left(\frac{\|\x_i - \x_j\|}{\sigma_n}\right) \xrightarrow{a.s.} \int_{M} p(\x)^2 d\x, \mbox{ as } n \to \infty,
\end{align*}
where $c_d = (\int_{\R^d} k(\|\x\|)d\x)^{-1}$.
}

\begin{proof}
First notice that
$$
\frac{1}{n^2\sigma_n^d}\sum_{i:\x_i \in M}\sum_{j \not = i}k\left(\frac{\|\x_i - \x_j\|}{\sigma_n}\right) = \frac{1}{n^2\sigma_n^d}\sum_{i:\x_i \in M}\sum_{j =1}^n k\left(\frac{\|\x_i - \x_j\|}{\sigma_n}\right) + \mathcal{O}(n^{-1}\sigma_n^{-d})
$$
Now,
\begin{align*}
\frac{1}{n^2 \sigma_n^d} \sum_{i: \x_i \in M} \sum_{j=1}^n k\left(\frac{\| \x_i - \x_j\|}{\sigma_n}\right) &= \frac{1}{n}\sum_{i: \x_i \in M}^n p(\x_i) + \frac{1}{n}\sum_{i:\x_i \in M}\left(\frac{1}{n \sigma_n^d}\sum_{j=1}^n k\left(\frac{\|\x_i - \x_j\|}{\sigma_n}\right) - p(\x_i)\right).
\end{align*}
The first term is a standard Monte Carlo estimate of the integral $\int_{M} p(\x)^2 d\x$ and satisfies,
\begin{align*}
\left \vert \frac{1}{n}\sum_{i: \x_i \in M}^n p(\x_i) - \int_{M} p(\x)^2 d\x\right \vert \xrightarrow{a.s.} 0 \mbox{ as } n \to \infty.
\end{align*}
The second term represents the average error of a kernel estimator of $p$, and clearly satisfies,
\begin{align*}
\left \vert \frac{1}{n}\sum_{i:\x_i \in M}\left(\frac{1}{n \sigma_n^d}\sum_{j=1}^n k\left(\frac{\|\x_i - \x_j\|}{\sigma_n}\right) - p(\x_i)\right)\right \vert \leq \sup_{\x \in \R^d} \left \vert \frac{1}{n \sigma_n^d} \sum_{j=1}^n k\left(\frac{\|\x - \x_j\|}{\sigma_n}\right) - p(\x)\right \vert,
\end{align*}
where $\frac{1}{n\sigma_n^d} \sum_{j=1}^n k(\|\x - \x_j\|/\sigma_n)$ is the kernel based estimate of $p(\x)$. Now, since $p$ is Lipschitz continuous, and since the Gaussian kernel clearly satisfies the conditions of~\cite[Theorem 1]{devroye1980strong} for $\sigma_n \to 0, n\sigma_n^{2d+2+\epsilon} \to \infty$ for any $\epsilon > 0$,
the kernel estimate converges uniformly almost surely. Specifically,
\begin{align*}
\sup_{\x \in \R^d} \left \vert \frac{1}{n \sigma_n^d} \sum_{j=1}^n k\left(\frac{\|\x - \x_j\|}{\sigma_n}\right) - p(\x)\right \vert \xrightarrow{a.s.} 0 \mbox{ as } n \to \infty.
\end{align*}
Putting these together gives the result.
\end{proof}

\subsection*{Lemma~\ref{thm:support2}}

{\em
Let $\X = \{\x_1, \dots, \x_n\}$ be an i.i.d. sample of realisations of a continuous random variable, $\mathbf{X}$, over $\R^d$, with bounded Lipschitz continuous probability density $p:\R^d \to \R^+$. Let $S$ be a smooth surface which partitions $\R^d$ into $S_1$ and $S_2$ such that $0 < \mathbb{P}(\mathbf{X} \in S_1) < 1$. Let $\{\sigma_i\}_{i=1}^\infty$ be a null sequence satisfying $n\sigma_n^{2d+2+\epsilon} \to \infty$ for some $\epsilon > 0$. Then,
$$
\frac{c_d\sqrt{2\pi}}{n^2 \sigma_n^{d+1}}\sum_{i: \x_i \in S_1} \sum_{j: \x_j \in S_2} k\left(\frac{\|\x_i -  \x_j\|}{\sigma_n}\right) \xrightarrow{a.s.} \oint_S p(\y)^2 d \y \mbox{ as } n \to \infty,
$$
where $c_d = (\int_{\R^d} k(\|\x\|)d\x)^{-1}$.
}

\begin{proof}
To begin with, take $i \in \{1, \dots, n\}$, and let $\X^\prime = (\X \setminus \{\x_i\}) \cup \{\x_i^\prime\}$ for any $\x_i^\prime \in \R^d$.
$$
\left\vert \sum_{\x \in \X\cap S_1} \sum_{\y \in \X \cap S_2} k\left(\frac{\|\x-\y\|}{\sigma_n}\right) - \sum_{\x \in \X^\prime \cap S_1} \sum_{\y \in \X^\prime \cap S_2} k\left(\frac{\|\x-\y\|}{\sigma_n}\right)\right\vert \leq n,
$$
and hence the random variable $\nu = \frac{\sqrt{2\pi}}{n^2 \sigma_n^{d+1}}\sum_{i: \x_i \in S_1} \sum_{j: \x_j \in S_2} k\left(\frac{\|\x_i -  \x_j\|}{\sigma_n}\right)$ is difference bounded by $\frac{\sqrt{2\pi}}{n \sigma_n^{d+1}}$.
McDiarmid's inequality therefore ensures,
$$
\mathbb{P}\left(\left\vert \nu - \mathbb{E}[\nu] \right \vert > \epsilon \right) < \exp\left(-\frac{2\epsilon^2n \sigma_n^{2d+2}}{2\pi} \right).
$$
By assumption on the rate of convergence of $\sigma_n$ to zero, 
it is therefore sufficient to show that $\mathbb{E}[\nu] \to \oint_S p(\y)^2 d\y$ as $\sigma_n \to 0$. We henceforth drop the notational dependence on $n$ and write only $\sigma$. 

Establishing the convergence of $\mathbb{E}[\nu]$ will be achieved in a number of steps. To begin with consider that,
\begin{align*}
\mathbb{E}[\nu] = \frac{\sqrt{2\pi}}{\sigma}\mathbb{P}(\mathbf{X} \in S_1) \mathbb{P}(\mathbf{X} \in S_2) \mathbb{E}\left[\frac{1}{\sigma^d}k\left(\frac{\| \mathbf{Y} - \mathbf{Z}\|}{\sigma}\right)\right],
\end{align*}
where $\mathbf{Y}$ and $\mathbf{Z}$ are random variables representing the truncations of $\mathbf{X}$ in $S_1$ and $S_2$ respectively. Therefore,
\begin{align*}
\mathbb{E}\left[\frac{1}{\sigma^d} k\left(\frac{\| \mathbf{Y} - \mathbf{Z}\|}{\sigma}\right)\right] &= \int_{S_1} \int_{S_2} \frac{1}{\sigma^d} k\left(\frac{\| \y - \z\|}{\sigma}\right) \frac{p(\y)}{\mathbb{P}(\mathbf{X} \in S_1)} \frac{p(\z)}{\mathbb{P}(\mathbf{X} \in S_2)} d\z d\y \\
\Rightarrow  \mathbb{E}[\nu] &= \frac{\sqrt{2\pi}}{\sigma}\int_{S_1} \int_{S_2} \frac{1}{\sigma^d}k\left(\frac{\| \y - \z\|}{\sigma}\right) p(\y) p(\z) d\z d\y.
\end{align*}
Next we use the Lipschitz continuity of $p$ to show that the product $p(\y)p(\z)$ can be replaced by $p(\y)^2$ while inducing very small error. This is because the kernel weight $k(\|\y-\z\|/\sigma)$ is only large when $\z$ is close to $\y$, and hence $p(\z) \approx p(\y)$. Suppose $p$ has Lipschitz constant $L$. Then we have
\begin{align*}
p(\y) \int_{S_2} \frac{1}{\sigma^d}k\left(\frac{\|\y - \z\|}{\sigma}\right) (p(\y) - L \|\z - \y\|) d\z & \leq \int_{S_2} \frac{1}{\sigma^d}k\left(\frac{\| \y - \z\|}{\sigma}\right) p(\y)p(\z) d\z\\
& \leq p(\y) \int_{S_2} \frac{1}{\sigma^d}k\left(\frac{\|\y - \z\|}{\sigma}\right) (p(\y) + L \|\z - \y\|) d\z.
\end{align*}
Now, if $\y$ is s.t. $d(\y, S)\geq \sqrt{\sigma}$, then for small $\sigma$ we have
\begin{align*}
\int_{S_2}\frac{1}{\sigma^d}k\left(\frac{\|\y-\z\|}{\sigma}\right) \|\y-\z\|d\z & \leq \int_{\z : \| \z - \y \| \geq \sqrt{\sigma}} \frac{1}{\sigma^d}k\left(\frac{\|\y-\z\|}{\sigma}\right) \|\y-\z\|d\z\\
& = \mathcal{O}\left(\sqrt{\sigma}\left(1-\Phi(\sigma^{-0.5})\right)\right) = \mathcal{O}(\sigma^2),
\end{align*}
where $\Phi$ is the distribution function of the standard Gaussian random variable.
It is clear that in fact this quantity is of a far smaller order of magnitude than $\sigma^2$ as $\sigma$ approaches zero, however this is sufficient for our purposes. On the other hand, if $d(\y, S) < \sqrt{\sigma}$, then we instead have
\begin{align*}
\int_{S_2}\frac{1}{\sigma^d}k\left(\frac{\|\y-\z\|}{\sigma}\right) \|\y-\z\|d\z &\leq \int_{\R^d}\frac{1}{\sigma^d}k\left(\frac{\|\y-\z\|}{\sigma}\right) \|\y-\z\|d\z\\
& = \mathcal{O}(\sigma),
\end{align*}
as the latter integral is the expected distance of a standard $d$ dimensional Gaussian random variable from its mean, and is $\mathcal{O}(\sqrt{d}\sigma) = \mathcal{O}(\sigma)$.\\
\\
Next, using a similar geometric construction to that in~\citep{narayanan2006relation}, let $r$ be the radius of the largest ball which can be placed tangent to $S$ at any point and which intersects $S$ at a single point. Then, for $\y \in S_1$ s.t. $d(\y, S) < \min\{r, \sqrt{\sigma}\}$ let $\y^\prime$ be the nearest point to $\y$ lying on $S$ ($\y^\prime$ is unique since $d(\y, S) < r$). Define $H$ to be the hyperplane tangent to $S$ at $\y^\prime$ and let $H_1$ be the corresponding half space containing $\y$ and $H_2$ the half space not containing $\y$. We will show that as $\sigma$ approaches zero,
$$
\int_{S_2} \frac{1}{\sigma^d} k\left(\frac{\| \y - \z \|}{\sigma}\right) d\z = \int_{H_2} \frac{1}{\sigma^d} k\left(\frac{\| \y - \z \|}{\sigma}\right) d\z + \mathcal{O}(\sigma).
$$
To that end, let $B_1$ be the ball of radius $r$ tangent to $S$ at $\y^\prime$ and lying in $S_1$. Define $B_2$ similarly except $B_2 \subset S_2$. 
We clearly have
\begin{align*}
\left \vert \int_{S_2} \frac{1}{\sigma^d} k\left(\frac{\|\z - \y \|}{\sigma}\right) d\z\right. - & \left. \int_{H_2} \frac{1}{\sigma^d} k\left(\frac{\|\z - \y \|}{\sigma}\right) d\z \right \vert \leq \\
&  \int_{\R^d \setminus B_1} \frac{1}{\sigma^d} k\left(\frac{\|\z - \y \|}{\sigma}\right) d\z - \int_{B_2} \frac{1}{\sigma^d} k\left(\frac{\|\z - \y \|}{\sigma}\right) d\z.
\end{align*}
By translation so that the centre of $B_1$ lies at the origin, we have
$$
\int_{\R^d \setminus B_1} \frac{1}{\sigma^d} k\left(\frac{\|\z - \y \|}{\sigma}\right) d\z = 1 - \mathbb{P}\left( \|\mathbf{W}\|^2 \leq r^2\right),
$$
where $\mathbf{W} \sim \mathcal{N}(\y, \sigma^2 I)$. Therefore, the random variable $\|\mathbf{W}\|^2/\sigma^2$ is distributed $\chi^2$ with $d$ degrees of freedom and non-centrality parameter $\frac{1}{\sigma^2}(r - d(\y, S))^2$.
For brevity let $\eta = d(\y, S)$. We therefore have,
$$
\int_{\R^d \setminus B_1} \frac{1}{\sigma^d} k\left(\frac{\|\z - \y \|}{\sigma}\right) d\z = Q_{d/2}\left(\frac{r - \eta}{\sigma}, \frac{r}{\sigma}\right),
$$
where $Q_\nu(\alpha, \beta)$ is the Marcum $Q$-function. Similarly, we find that
$$
\int_{B_2} \frac{1}{\sigma^d} k\left(\frac{\|\z - \y \|}{\sigma}\right) d\z = 1 - Q_{d/2}\left(\frac{r + \eta}{\sigma}, \frac{r}{\sigma}\right).
$$
Now, for $z \in \mathbb{Z}$, it has been shown that~\citep{sun2008tight}
\begin{align*}
Q_{z + 0.5}(\alpha, \beta) &= \frac{1}{2} \mathrm{erfc}\left(\frac{\alpha + \beta}{\sqrt{2}}\right) + \frac{1}{2} \mathrm{erfc}\left(\frac{\beta - \alpha}{\sqrt{2}}\right) + \left(\frac{1}{2\pi \alpha \beta}\right)^{0.5} \exp\left(-\frac{\alpha^2+\beta^2}{2}\right) \sum_{k=0}^{z-1}\left(\frac{\beta}{\alpha}\right)^{k+0.5}\\
& \hspace{10pt} \times \sum_{r=0}^k\frac{(k+r)!}{r!(k-r)!(2\alpha \beta)^r}\left((-1)^r \exp\left(\alpha\beta\right) + (-1)^{k+1}\exp\left(-\alpha\beta\right)\right),
\end{align*}
where erfc is the complementary error function erfc$(x) = \frac{2}{\sqrt{\pi}}\int_{x}^\infty \exp(-z^2) dz$.
With some straightforward algebra, and for $\alpha, \beta \gg 0$ we can arrive at,
\begin{align*}
Q_{z + 0.5}(\alpha, \beta) &= \frac{1}{2} \mathrm{erfc}\left(\frac{\alpha + \beta}{\sqrt{2}}\right) + \frac{1}{2} \mathrm{erfc}\left(\frac{\beta-\alpha}{\sqrt{2}}\right) + \mathcal{O}(\beta^{z-1}\alpha^{-z}), & \beta \geq \alpha\\
Q_{z + 0.5}(\alpha, \beta) &= \frac{1}{2} \mathrm{erfc}\left(\frac{\alpha + \beta}{\sqrt{2}}\right) + \frac{1}{2} \mathrm{erfc}\left(\frac{\beta - \alpha}{\sqrt{2}}\right) + \mathcal{O}(\alpha\beta)^{-0.5}, & \mbox{otherwise}.
\end{align*}
For $d$ odd we therefore have,
\begin{align*}
 \int_{\R^d \setminus B_1} \frac{1}{\sigma^d} k\left(\frac{\|\z - \y \|}{\sigma}\right) d\z & - \int_{B_2} \frac{1}{\sigma^d} k\left(\frac{\|\z - \y \|}{\sigma}\right) d\z \leq \frac{1}{2}\left(\erfc\left(\frac{2r - \eta}{\sqrt{2}\sigma}\right) + \erfc\left(\frac{\eta}{\sqrt{2}\sigma}\right)\right)\\
& + \frac{1}{2}\left(\erfc\left(\frac{2r + \eta}{\sqrt{2}\sigma}\right) + \erfc\left(\frac{-\eta}{\sqrt{2}\sigma}\right)\right) - 1 + \mathcal{O}\left(\sigma\right).
\end{align*}
Now, erfc$(-x) + $erfc$(x) = 2$. Therefore,
\begin{align*}
 \int_{\R^d \setminus B_1} \frac{1}{\sigma^d} k\left(\frac{\|\z - \y \|}{\sigma}\right) d\z  - \int_{B_2} \frac{1}{\sigma^d} k\left(\frac{\|\z - \y \|}{\sigma}\right) d\z & \leq \frac{1}{2}\left(\erfc\left(\frac{2r - \eta}{\sqrt{2}\sigma}\right) + \erfc\left(\frac{2r + \eta}{\sqrt{2}\sigma}\right)  \right) + \mathcal{O}\left(\sigma\right)\\
& =  \mathcal{O}\left(\sigma\right)
\end{align*}
as $\sigma \to 0$, since $r > \eta$ and using the fact that erfc$(x) \leq \exp(-x)$ for $x>0$. For $d$ even we can simply use the above and the fact that $Q_{\nu}(\alpha, \beta)$ is increasing in $\nu$ for all fixed $\alpha, \beta$.
Specifically, $Q_{z-0.5}(\alpha, \beta) < Q_{z}(\alpha, \beta) < Q_{z+0.5}(\alpha, \beta)$ for $z \in \mathbb{Z}$. 

We therefore have shown that, for $\y$ s.t. $d(\y, S) < \min\{r, \sqrt{\sigma}\}$,
$$
\left \vert\int_{S_2} \frac{1}{\sigma^d} k\left(\frac{\| \y - \z \|}{\sigma}\right) d\z - \int_{H_2} \frac{1}{\sigma^d} k\left(\frac{\| \y - \z \|}{\sigma}\right) d\z \right \vert = \mathcal{O}(\sigma).
$$
On the other hand, if $d(\y, S) \geq \min\{r, \sqrt{\sigma}\}$, then we instead have
$$
\left \vert\int_{S_2} \frac{1}{\sigma^d} k\left(\frac{\| \y - \z \|}{\sigma}\right) d\z - \int_{H_2} \frac{1}{\sigma^d} k\left(\frac{\| \y - \z \|}{\sigma}\right) d\z \right \vert = \mathcal{O}(\sigma^2),
$$
as $\sigma \to 0$ since both terms in the left hand side are $\mathcal{O}(\sigma^m)$ for all $m \in \R$. The exponent of $\sigma$ is not crucial for the final result except that it must be strictly greater than $1$ in this instance.\\
\\
The next step simply relies on the recognition that, since $k$ is the Gaussian kernel and the Gaussian distribution is closed under marginalisation, we have
$$
\int_{H_2} \frac{1}{\sigma^d} k\left(\frac{\| \y - \z \|}{\sigma}\right) d\z = 1 - \Phi \left(\frac{\eta}{\sigma}\right).
$$
This is achieved by a transformation such that $\y$ lies at the origin and $H$ is defined by the equation $\z_1 = \eta$, and through marginalising out all other dimensions.

Bringing these approximations together, we therefore have for small $\sigma$,
\begin{align*}
\mathbb{E}[\nu] & = \frac{\sqrt{2\pi}}{\sigma} \int_{S_1} \int_{S_2} \frac{1}{\sigma^d}k\left(\frac{\|\y - \z\|}{\sigma}\right) p(\y) p(\z) d\z d\y\\
& = \frac{\sqrt{2\pi}}{\sigma}\int_{\substack{\y: \y \in S_1 \\ d(\y, S) < \sqrt{\sigma}}} p(\y)\left(p(\y)\int_{S_2} \frac{1}{\sigma^d}k\left(\frac{\|\y - \z\|}{\sigma}\right) d\z + \mathcal{O}(\sigma)\right)d\y\\
& \hspace{10pt} + \frac{\sqrt{2\pi}}{\sigma}\int_{\substack{\y: \y \in S_1 \\ d(\y, S) \geq \sqrt{\sigma}}} p(\y)\left(p(\y)\int_{S_2} \frac{1}{\sigma^d}k\left(\frac{\|\y - \z\|}{\sigma}\right) d\z + \mathcal{O}(\sigma^2)\right)d\y\\
& = \frac{\sqrt{2\pi}}{\sigma}\int_{\substack{\y: \y \in S_1 \\ d(\y, S) < \sqrt{\sigma}}} p(\y)\left(p(\y)\left(1-\Phi\left(\frac{d(\y, S)}{\sigma}\right) + \mathcal{O}(\sigma)\right) + \mathcal{O}(\sigma)\right)d\y\\
& \hspace{10pt} + \frac{\sqrt{2\pi}}{\sigma}\int_{\substack{\y: \y \in S_1 \\ d(\y, S) \geq \sqrt{\sigma}}} p(\y)\left(p(\y)\left(1-\Phi\left(\frac{d(\y, S)}{\sigma}\right)+ \mathcal{O}(\sigma^2)\right) + \mathcal{O}(\sigma^2)\right)d\y\\
& = \frac{\sqrt{2\pi}}{\sigma}\int_{\substack{\y: \y \in S_1 \\ d(\y, S) < \sqrt{\sigma}}} \left(p(\y)^2\left(1-\Phi\left(\frac{d(\y, S)}{\sigma}\right)\right) + p(\y)(1+p(\y))\mathcal{O}(\sigma)\right)d\y\\
& \hspace{10pt} + \frac{\sqrt{2\pi}}{\sigma}\int_{\substack{\y: \y \in S_1 \\ d(\y, S) \geq \sqrt{\sigma}}} \left(p(\y)^2\left(1-\Phi\left(\frac{d(\y, S)}{\sigma}\right)\right) + p(\y)(1+p(\y))\mathcal{O}(\sigma^2)\right)d\y\\
& = \frac{\sqrt{2\pi}}{\sigma}\int_0^{\sqrt{\sigma}} \oint_{\substack{\y: \y \in S_1\\ d(\y, S) = \eta}} \left(p(\y)^2 \left(1-\Phi\left(\frac{\eta}{\sigma}\right)\right) +p(\y)(1+p(\y))\mathcal{O}(\sigma)\right)d\y d\eta\\
& \hspace{10pt} + \frac{\sqrt{2\pi}}{\sigma}\int_{\sqrt{\sigma}}^\infty \oint_{\substack{\y: \y \in S_1\\ d(\y, S) = \eta}} \left(p(\y)^2 \left(1 - \Phi\left(\frac{\eta}{\sigma}\right)\right)  + p(\y)(1+p(\y))\mathcal{O}(\sigma^2)\right) d\y d\eta\\
& = \frac{\sqrt{2\pi}}{\sigma} \int_{0}^\infty \oint_{\substack{\y: \y \in S_1\\ d(\y, S) = \eta}} p(\y)^2 \left(1 -  \Phi\left(\frac{\eta}{\sigma}\right)\right) d\y d\eta + \mathcal{O}(\sqrt{\sigma}).
\end{align*}
Now, since $p$ is bounded and Lipschitz, we know that 
$$
f:\R^+ \to \R^+: \eta \mapsto \oint_{\substack{\y: \y \in S_1\\ d(\y, S) = \eta}} p(\y)^2d\y
$$
is Lipschitz. Therefore,
\begin{align*}
\mathbb{E}[\nu] & = \frac{\sqrt{2\pi}}{\sigma} \int_0^\infty \left(1 - \Phi\left(\frac{\eta}{\sigma}\right)\right) \left(\oint_S p(\y)^2 d\y + \mathcal{O}(\eta)\right) d\eta + \mathcal{O}(\sqrt{\sigma})\\
& = \oint_S p(\y)^2 d\y \left(\frac{\sqrt{2\pi}}{\sigma} \int_0^\infty \left(1 - \Phi\left(\frac{\eta}{\sigma}\right)\right)\right) + \frac{\sqrt{2\pi}}{\sigma}\int_0^\infty \left(1 - \Phi\left(\frac{\eta}{\sigma}\right)\right)\mathcal{O}(\eta) d\eta + \mathcal{O}(\sqrt{\sigma})\\
& = \oint_S p(\y)^2d\y + \mathcal{O}(\sqrt{\sigma}),
\end{align*}
since $\frac{\sqrt{2\pi}}{\sigma}\int_0^\infty (1-\Phi(\eta/\sigma)) d\eta = 1$ and $\frac{1}{\sigma}\int_0^\infty (1-\Phi(\eta/\sigma))\eta d\eta = \mathcal{O}(\sigma)$. This proves the result.
\end{proof}

\bibliographystyle{Chicago}


\end{document}